\newtheorem{theorem}{Theorem}
\theoremstyle{definition}
\newtheorem{definition}{Definition}
\newif\ifpgf@rectanglewrc@donecorner@
\def\pgf@rectanglewithroundedcorners@docorner#1#2#3#4{%
  \edef\pgf@marshal{%
    \noexpand\pgfintersectionofpaths
      {%
        \noexpand\pgfpathmoveto{\noexpand\pgfpoint{\the\pgf@xa}{\the\pgf@ya}}%
        \noexpand\pgfpathlineto{\noexpand\pgfpoint{\the\pgf@x}{\the\pgf@y}}%
      }%
      {%
        \noexpand\pgfpathmoveto{\noexpand\pgfpointadd
          {\noexpand\pgfpoint{\the\pgf@xc}{\the\pgf@yc}}%
          {\noexpand\pgfpoint{#1}{#2}}}%
        \noexpand\pgfpatharc{#3}{#4}{\cornerradius}%
      }%
    }%
  \pgf@process{\pgf@marshal\pgfpointintersectionsolution{1}}%
  \pgf@process{\pgftransforminvert\pgfpointtransformed{}}%
  \pgf@rectanglewrc@donecorner@true
}
  \savedmacro\cornerradius{%
    \edef\cornerradius{\pgfkeysvalueof{/pgf/rectangle corner radius}}%
  }
    \edef\pgf@marshal{%
      \noexpand\pgfpointborderrectangle
      {\noexpand\pgfqpoint{\the\pgf@xb}{\the\pgf@yb}}
      {\noexpand\pgfqpoint{\the\pgf@xc}{\the\pgf@yc}}%
    }%
    \pgfextract@process\borderpoint{}%
       \pgf@rectanglewithroundedcorners@docorner{\cornerradius}{0pt}{0}{90}%
\tikzstyle{line} = [draw, -latex']
\tikzstyle{hidden} = [ellipse, draw, text centered, inner sep=1pt]
\tikzstyle{obs} = [ellipse, draw, fill=gray!60, text centered, inner sep=1pt]
\tikzstyle{rv} = [draw, ellipse, inner sep=2pt]
\tikzstyle{pf} = [draw, rectangle, fill=gray]
\tikzstyle{pc} = [rectangle with rounded corners,draw,rectangle corner radius=15pt, align=center, minimum width=22mm, font=\normalsize, inner sep=3pt]
\tikzstyle{pc2} = [draw, rounded corners=8pt,align=center,minimum height=6mm,font=\normalsize,inner sep=2pt]
\tikzstyle{nhidden} = [draw=none,fill=none,ellipse, text centered, inner sep=1pt]
\tikzstyle{nobs} = [draw=none,fill=none,ellipse, fill=gray!60, text centered, inner sep=1pt]
\tikzstyle{nrv} = [draw=none,fill=none,ellipse, inner sep=2pt]
\tikzstyle{npf} = [draw=none,fill=none,rectangle]
\tikzstyle{ID} = [draw, circle, font=\normalsize]
\tikzstyle{INN} = [draw, circle, inner sep=1pt, fill=black]
\newcommand\pfs[8]{
  \node[pf, #1 of=#2, node distance=#3, xshift=-1mm, yshift=1mm](#6) {};
  \node[pf, #1 of=#2, node distance=#3, label=#4:{#5}](#7) {};
  \node[pf, #1 of=#2, node distance=#3, xshift=1mm, yshift=-1mm](#8) {};
}
\title{Preventing Unnecessary Groundings in the Lifted Dynamic Junction Tree Algorithm}
\author{Marcel Gehrke, Tanya Braun, and Ralf Möller\\ 
Institute of Information Systems, Universität zu Lübeck, Lübeck  \\
\{gehrke, braun, moeller\}@ifis.uni-luebeck.de}
\begin{document}

    \acrodef{SHR}[SHR]{Standard Health Record}
\acrodef{ehr}[EHR]{electronic health record}
\acrodef{FHIM}[FHIM]{Federal Health Information Model}
\acrodef{OHDSI}[OHDSI]{OMOP Common Data Model, the Observational Health Data Sciences and Informatics}
\acrodef{FHIR}[FHIR]{HL7’s FAST Healthcare Interoperability Resources}

\acrodef{jtree}[jtree]{junction tree}
\acrodef{plms}[PLMs]{probabilistic logical models}

\acrodef{pdb}[PDB]{probabilistic database}

\acrodef{pf}[parfactor]{parametric factor}
\acrodef{lv}[logvar]{logical variable}
\acrodef{rv}[randvar]{random variable}
\acrodef{crv}[CRV]{counting randvar}
\acrodef{prv}[PRV]{parameterised randvar}

\acrodef{fodt}[FO dtree]{first-order decomposition tree}

\acrodef{fojt}[FO jtree]{first-order junction tree}
\acrodef{ljt}[LJT]{lifted junction tree algorithm}
\acrodef{ldjt}[LDJT]{lifted dynamic junction tree algorithm}
\acrodef{lve}[LVE]{lifted variable elimination}

\acrodef{2tpm}[2TPM]{two-slice temporal parameterised model}
\acrodef{2tbn}[2TBN]{two-slice temporal bayesian network}

\acrodef{pm}[PM]{parameterised probabilistic model}
\acrodef{pdecm}[PDecM]{parameterised probabilistic decision model}

\acrodef{pdm}[PDM]{parameterised probabilistic dynamic model}
\acrodef{pddecm}[PDDecM]{parameterised probabilistic dynamic decision model}

\acrodef{dbn}[DBN]{dynamic Bayesian network}
\acrodef{bn}[BN]{Bayesian network}

\acrodef{dfg}[DFG]{dynamic factor graph}

\acrodef{dmln}[DMLN]{dynamic Markov logic network}

\acrodef{rdbn}[RDBN]{relational dynamic Bayesian network}

\acrodef{meu}[MEU]{maximum expected utility}
\acrodef{mldn}[MLDN]{Markov logic decision network}

\maketitle
    \begin{abstract}
    The \ac{ldjt} efficiently answers filtering and prediction queries for probabilistic relational temporal models by building and then reusing a first-order cluster representation of a knowledge base for multiple queries and time steps.
    Unfortunately, a non-ideal elimination order can lead to groundings even though a lifted run is possible for a model.
    We extend \ac{ldjt}
    \begin{enumerate*}
        \item to identify unnecessary groundings while proceeding in time and
        \item to prevent groundings by delaying eliminations through changes in a temporal first-order cluster representation. 
    \end{enumerate*}   
    The extended version of LDJT answers multiple temporal queries orders of magnitude faster than the original version.
\end{abstract}
\acresetall	

    \section{Introduction}\label{sec:intro}

Areas like healthcare, logistics or even scientific publishing deal with probabilistic data with relational and temporal aspects and need efficient exact inference algorithms.
These areas involve many objects in relation to each other with changes over time and uncertainties about object existence, attribute value assignments, or relations between objects.  
More specifically, publishing involves publications (the relational part) for many authors (the objects), streams of papers over time (the temporal part), and uncertainties for example due to missing or incomplete information.
By performing model counting, \acp{pdb} can answer queries for relational temporal models with uncertainties~\cite{dignos2012temporal,dylla2013temporal}.
However, each query embeds a process behaviour, resulting in huge queries with possibly redundant information.
In contrast to \acp{pdb}, we build more expressive and compact models including behaviour (offline) enabling efficient answering of more compact queries (online).
For query answering, our approach performs deductive reasoning by computing marginal distributions at discrete time steps.
In this paper, we study the problem of exact inference and investigate how to prevent unnecessary groundings in large temporal probabilistic models that exhibit symmetries.

We propose \acp{pdm} to represent probabilistic relational temporal behaviour and introduce the \ac{ldjt} to exactly answer multiple filtering and prediction queries for multiple time steps efficiently \cite{gehrke2018ldjt}.
\ac{ldjt} combines the advantages of the interface algorithm~\cite{Murphy:2002:DBN} and the \ac{ljt}~\cite{BrMoe16a}.
Specifically, this paper extends \ac{ldjt} and contributes
\begin{enumerate*}
    \item means to identify whether groundings occur and
    \item an approach to prevent unnecessary groundings by extending inter \ac{fojt} separators.
\end{enumerate*} 

\ac{ldjt} reuses an \ac{fojt} structure to answer multiple queries and reuses the structure to answer queries for all time steps $t > 0$.
Additionally, \ac{ldjt} ensures a minimal exact inter \ac{fojt} information propagation over a separator. 
Unfortunately, due to a non-ideal elimination order unnecessary groundings can occur.
In the static case, \ac{ljt} prevents groundings by fusing parclusters, the nodes of an \ac{fojt}.
For the temporal case, fusing parclusters is not applicable, as \ac{ldjt} would need to fuse parclusters of different \acp{fojt}. 
We propose to prevent groundings by extending inter \ac{fojt} separators and thereby changing the elimination order by delaying eliminations to the next time step.

The remainder of this paper has the following structure: 
We begin by recapitulating \acp{pdm} as a representation for relational temporal probabilistic models and present \ac{ldjt}, an efficient reasoning algorithm for \acp{pdm}.
Afterwards, we present \ac{ljt}'s techniques to prevent unnecessary groundings and extend \ac{ldjt} to prevent unnecessary groundings.
Lastly, we evaluate the extended version of \ac{ldjt} against \ac{ldjt}'s orignal version and \ac{ljt}.
We conclude by looking at possible extensions.

    \section{Related Work}\label{sec:rel}
We take a look at inference for propositional temporal models, relational static models, and give an overview about relational temporal model research.

For exact inference on propositional temporal models, a naive approach is to unroll the temporal model for a given number of time steps and use any exact inference algorithm for static, i.e., non-temporal, models.
In the worst case, once the number of time steps changes, one has to unroll the model and infer again.
Murphy~\shortcite{Murphy:2002:DBN} proposes the interface algorithm consisting of a forward and backward pass that uses a temporal d-separation with a minimal set of nodes to apply static inference algorithms to the dynamic model.

First-order probabilistic inference leverages the relational aspect of a static model.
For models with known domain size, first-order probabilistic inference exploits symmetries in a model by combining instances to reason with representatives, known as lifting \cite{poole2003first}. 
Poole~\shortcite{poole2003first} introduces parametric factor graphs as relational models and proposes \ac{lve} as an exact inference algorithm on relational models.
Further, de Salvo Braz~\shortcite{Braz07}, Milch et al.~\shortcite{milch2008lifted}, and Taghipour et al.~\shortcite{TagFiDaBl13} extend \ac{lve} to its current form.
Lauritzen and Spiegelhalter~\shortcite{lauritzen1988local}  introduce the junction tree algorithm.
To benefit from the ideas of the junction tree algorithm and \ac{lve}, Braun and Möller~\shortcite{BrMoe16a} present \ac{ljt}, which efficiently performs exact first-order probabilistic inference on relational models given a set of queries.

To handle inference for relational temporal models most approaches are approximative.
Additional to being approximative, these approaches involve unnecessary groundings or are only designed to handle single queries efficiently.
Ahmadi et al.~\shortcite{ahmadi2013exploiting} propose lifted (loopy) belief propagation.
From a factor graph, they build a compressed factor graph and apply lifted belief propagation with the idea of the factored frontier algorithm \cite{murphy2001factored}, which is an approximate counterpart to the interface algorithm. 
Thon et al.~\shortcite{thon2011stochastic} introduce CPT-L, a probabilistic model for sequences of relational state descriptions with a partially lifted inference algorithm.
Geier and Biundo~\shortcite{geier2011approximate} present an online interface algorithm for \acp{dmln}, similar to the work of Papai et al.~\cite{papai2012slice}.
Both approaches slice \acp{dmln} to run well-studied static MLN \cite{richardson2006markov} inference algorithms on each slice individually.  
Two ways of performing online inference using particle filtering are described in \cite{manfredotti2009modeling,nitti2013particle}.

Vlasselaer et al.~\shortcite{vlasselaer2014efficient,vlasselaer2016tp} introduce an exact approach, which involves computing probabilities of each possible interface assignment on a ground level.
    \section{Parameterised Probabilistic Models}\label{sec:back}
Based on \cite{BraMo17}, we present \acp{pm} for relational static models.
Afterwards, we extend \acp{pm} to the temporal case, resulting in \acp{pdm} for relational temporal models, which, in turn, are based on \cite{gehrke2018ldjt}. 

\subsection{Parameterised Probabilistic Models}\label{pm}

\acp{pm} combine first-order logic with probabilistic models, representing first-order constructs using \acp{lv} as parameters.
\begin{definition}
    Let $\mathbf{L}$ be a set of \ac{lv} names, $\Phi$ a set of factor names, and $\mathbf{R}$ a set of \ac{rv} names. 
    A \ac{prv} $A = P(X^1,...,X^n)$ represents a set of \acp{rv} behaving identically by combining a \ac{rv} $P \in \mathbf{R}$ with $X^1,...,X^n \in \mathbf{L}$.
    If $n = 0$, the \ac{prv} is parameterless.
    The domain of a \ac{lv} $L$ is denoted by $\mathcal{D}(L)$.
    The term $range(A)$ provides possible values of a \ac{prv} $A$.
    Constraint $(\mathbf{X},C_\mathbf{X})$ allows to restrict \acp{lv} to certain domain values and is a tuple with a sequence of \acp{lv} $\mathbf{X} = (X^1,...,X^n)$ and a set $C_\mathbf{X} \subseteq \times_{i=1}^n \mathcal{D}(X^i)$.
    $\top$ denotes that no restrictions apply and may be omitted.
    The term $lv(Y)$ refers to the \acp{lv} in some element $Y$. 
    The term $gr(Y)$ denotes the set of instances of $Y$ with all \acp{lv} in $Y$ grounded w.r.t.\ constraints.
\end{definition}

Let us set up a \ac{pm} for publications on some topic. 
We model that the topic may be hot, conferences are attractive, people do research, and publish in publications.
From $\mathbf{R} = \{Hot, DoR\}$ and $\mathbf{L} = \{A, P, X\}$ with $\mathcal{D}(A) = \{a_1, a_2\}$, $\mathcal{D}(P) = \{p_1, p_2\}$, and $\mathcal{D}(X) = \{x_1, x_2, x_3\}$, we build the boolean PRVs $Hot$ and $DoR(X)$.
With $C = (X, \{x_1, x_2\})$, $gr(DoR(X)|C) = \{DoR(x_1), DoR(x_2)\}$.

\begin{definition}
We denote a \ac{pf} $g$ with
$\forall \mathbf{X} : \phi(\mathcal{A})\;| C$.
$\mathbf{X} \subseteq \mathbf{L}$ being a set of \acp{lv} over which the factor generalises and $\mathcal{A} = (A^1,...,A^n)$ a sequence of \acp{prv}.
We omit $(\forall \mathbf{X} :)$ if $\mathbf{X} = lv(\mathcal{A})$.
A function $\phi : \times_{i=1}^n range(A^i) \mapsto \mathbb{R}^+$ with name $\phi \in \Phi$ is defined identically for all grounded instances of $\mathcal{A}$.
A list of all input-output values is the complete specification for $\phi$.
$C$ is a constraint on $\mathbf{X}$.
A \ac{pm} $G := \{g^i\}_{i=0}^{n-1}$ is a set of \acp{pf} and semantically represents the full joint probability distribution $P(G) = \frac{1}{Z} \prod_{f \in gr(G)} \phi(\mathcal{A}_f)$ where $Z$ is a normalisation constant.
\end{definition}

\begin{figure}[t]
\centering
\scalebox{0.9}{
\begin{tikzpicture}[rv/.style={draw, ellipse, inner sep=1pt},pf/.style={draw, rectangle, fill=gray},label distance=0.2mm]
	\node[rv]					 								(S)	{$Hot$};
    \pfs{left}{S}{20mm}{230}{$g^0$}{USa}{US}{USb}    
	\node[rv, left of=US, node distance=20mm, inner sep=0.5pt]			(U)	{$Pub(X,P)$};    
	\node[obs, below of=S, node distance=10mm]						(T1)	{$AttC(A)$};    
    \pfs{left}{T1}{20mm}{270}{$g^1$}{ASa}{AS}{ASb}    
	\node[rv, left of=AS, node distance=20mm, inner sep=0.5pt]			(A)	{$DoR(X)$};    
    
	\draw (U) -- (US);
	\draw (US) -- (S);
	\draw (US) -- (T1);

	\draw (A) -- (AS);
	\draw (AS) -- (S);
	\draw (AS) -- (T1);

\end{tikzpicture}
}
\caption{Parfactor graph for $G^{ex}$ }
\label{fig:swe}
\end{figure}
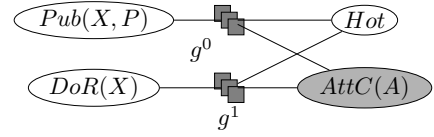%

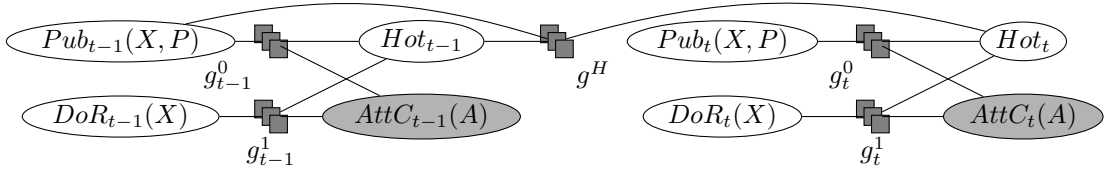
\begin{figure*}[t]
\center
\scalebox{1}{
\begin{tikzpicture}[rv/.style={draw, ellipse, inner sep=1pt},pf/.style={draw, rectangle, fill=gray},label distance=0.2mm]
	\node[rv]					 								(S)	{$Hot_{t-1}$};
    \pfs{left}{S}{20mm}{230}{$g^0_{t-1}$}{USa}{US}{USb}    
	\node[rv, left of=US, node distance=20mm, inner sep=0.5pt]			(U)	{$Pub_{t-1}(X,P)$};    
	\node[obs, below of=S, node distance=10mm]						(T1)	{$AttC_{t-1}(A)$};    
    \pfs{left}{T1}{20mm}{270}{$g^1_{t-1}$}{ASa}{AS}{ASb}    
	\node[rv, left of=AS, node distance=20mm, inner sep=0.5pt]			(A)	{$DoR_{t-1}(X)$};    
    
	\node[rv, right of = S, node distance=80mm]					 		(S1)	{$Hot_{t}$};
    \pfs{left}{S1}{20mm}{230}{$g^0_{t}$}{USa}{US1}{USb}    
	\node[rv, left of=US1, node distance=20mm, inner sep=0.5pt]			(U1)	{$Pub_{t}(X,P)$};    
	\node[obs, below of=S1, node distance=10mm]						(T11)	{$AttC_{t}(A)$};    
    \pfs{left}{T11}{20mm}{270}{$g^1_{t}$}{ASa}{AS1}{ASb}    
	\node[rv, left of=AS1, node distance=20mm, inner sep=0.5pt]			(A1)	{$DoR_{t}(X)$};

    
    \pfs{right}{S}{18mm}{315}{$g^H$}{UAa}{IU}{UAb}

    \path [-, bend right=15] (IU) edge node {} (U);
    \path [-, bend left=15] (IU) edge node {} (S1);
	\draw (IU) -- (S);
    
	\draw (U) -- (US);
	\draw (US) -- (S);
	\draw (US) -- (T1);

	\draw (A) -- (AS);
	\draw (AS) -- (S);
	\draw (AS) -- (T1);

	\draw (U1) -- (US1);
	\draw (US1) -- (S1);
	\draw (US1) -- (T11);

	\draw (A1) -- (AS1);
	\draw (AS1) -- (S1);
	\draw (AS1) -- (T11);


\end{tikzpicture}
}
\caption{$G_\rightarrow^{ex}$ the two-slice temporal parfactor graph for model $G^{ex}$}
\label{fig:TSPG}	
\end{figure*}

Adding boolean PRVs $Pub(X,P)$ and $AttC(A)$,
$G_{ex} = \{g^i\}^1_{i=0}$,
	$g^0 = \phi^0(Pub(X,P), AttC(A), Hot)$, 
	$g^1 = \phi^1(DoR(X), AttC(A), Hot)$
forms a model.
All \acp{pf} have eight input-output pairs (omitted). 
Constraints are $\top$, i.e., the $\phi$'s hold for all domain values.
E.g., $gr(g^1)$ contains four factors with identical $\phi$.
\Cref{fig:swe} depicts $G^{ex}$ as a graph with four variable nodes for the PRVs and two factor nodes for $g^0$ and $g^1$ with edges to the PRVs involved.
Additionally, we can observe the attractiveness of conferences.
The remaining \acp{prv} are latent.

The semantics of a model is given by grounding and building a full joint distribution.
In general, queries ask for a probability distribution of a \ac{rv} using a model's full joint distribution and fixed events as evidence. 

\begin{definition}
    Given a \ac{pm} $G$, a ground \ac{prv} $Q$ and grounded \ac{prv}s with fixed range values $\mathbf{E}$, the expression $P(Q|\mathbf{E})$ denotes a query w.r.t.\ $P(G)$.
\end{definition}


\subsection{Parameterised Probabilistic Dynamic Models}\label{sec:pdm}

To define \acp{pdm}, we use \acp{pm} and the idea of how \aclp{bn} give rise to \aclp{dbn}. 
We define \acp{pdm} based on the first-order Markov assumption, i.e., a time slice $t$ only depends on the previous time slice $t-1$. 
Further, the underlining process is stationary, i.e., the model behaviour does not change over time. 

\begin{definition}
    A \ac{pdm} is a pair of \acp{pm} $(G_0,G_\rightarrow)$ where
        $G_0$ is a PM representing the first time step and  
        $G_\rightarrow$ is a \acl{2tpm} representing $\mathbf{A}_{t-1}$ and $\mathbf{A}_t$ where
    $\mathbf{A}_\pi$ is  a set of \acp{prv} from time slice $\pi$.
\end{definition}

\Cref{fig:TSPG} shows how the model $G^{ex}$ behaves over time.
$G_\rightarrow^{ex}$ consists of $G^{ex}$ for time step $t-1$ and for time step $t$ with inter-slice \ac{pf} for the behaviour over time. 
In this example, the \ac{pf} $g^{H}$ is the inter-slice \acp{pf}. 

\begin{definition}
    Given a \ac{pdm} $G$, a ground \ac{prv} $Q_t$ and grounded \ac{prv}s with fixed range values $\mathbf{E}_{0:t}$ the expression $P(Q_t|\mathbf{E}_{0:t})$ denotes a query w.r.t.\ $P(G)$.
\end{definition}

The problem of answering a marginal distribution query $P(A^i_\pi|\mathbf{E}_{0:t})$ w.r.t.\ the model is called prediction for $\pi > t$ and filtering for $\pi = t$. 

    \section{Lifted Dynamic Junction Tree Algorithm}\label{sec:fodjt}
To provide means to answer queries for \acp{pm}, we introduce \ac{ljt}, mainly based on \cite{braun2017preventing}.
Afterwards, we present \ac{ldjt} \cite{gehrke2018ldjt} consisting of \ac{fojt} constructions for a \ac{pdm} and a filtering and prediction algorithm. 

\subsection{Lifted Junction Tree Algorithm}\label{ljt}

\ac{ljt} provides efficient means to answer queries $P(\mathbf{Q}|\mathbf{E})$, with a set of query terms, given a \ac{pm} $G$ and evidence $\mathbf{E}$, by performing the following steps:
\begin{enumerate*}
    \item Construct an \ac{fojt} $J$ for $G$.
    \item Enter $\mathbf{E}$ in $J$.
    \item Pass messages.
    \item Compute answer for each query $Q^i \in \mathbf{Q}$.
\end{enumerate*}
We first define an \ac{fojt} and then go through each step. 
To define an \ac{fojt}, we need to define parameterised clusters (parclusters), the nodes of an \ac{fojt}.

\begin{definition}
    A parcluster $\mathbf{C}$ is defined by $\forall \mathbf{L} : \mathbf{A} | C$.
    $\mathbf{L}$ is a set of \ac{lv}s, $\mathbf{A}$ is a set of \ac{prv}s with $lv(\mathbf{A}) \subseteq \mathbf{L}$, and $C$ a constraint on $\mathbf{L}$.
    We omit $(\forall \mathbf{L} :)$ if $\mathbf{L} = lv(\mathbf{A})$.
    A parcluster $\mathbf{C}^i$ can have parfactors $\phi(\mathcal{A}^\phi) | C^\phi $ assigned given that
    \begin{enumerate*}
        \item $\mathcal{A}^\phi \subseteq \mathbf{A}$,
        \item $lv(\mathcal{A}^\phi) \subseteq \mathbf{L}$, and
        \item $C^\phi \subseteq C$
    \end{enumerate*}
    holds.
    We call the set of assigned \ac{pf}s a local model $G^i$.
\\
An \ac{fojt} for a model $G$ is $J=(\mathbf{V},\mathbf{E})$ where $J$ is a cycle-free graph,
the nodes $\mathbf{V}$ denote a set of parcluster, and the set $\mathbf{E}$ edges between parclusters. 
An \ac{fojt} must satisfy the following properties:
\begin{enumerate*}
    \item A parcluster $\mathbf{C}^i$ is a set of \acp{prv} from $G$.
    \item For each \ac{pf} $\phi(\mathcal{A}) | C$  in G, $\mathcal{A}$ must appear in some parcluster $\mathbf{C}^i$.
    \item If a \ac{prv} from $G$ appears in two parclusters $\mathbf{C}^i$ and $\mathbf{C}^j$, it must also appear in every parcluster $\mathbf{C}^k$ on the path connecting nodes i and j in $J$.
\end{enumerate*}
The separator $\mathbf{S}^{ij}$ of edge $i-j$ is given by $\mathbf{C}^i \cap \mathbf{C}^j$ containing shared \acp{prv}. 
\end{definition}

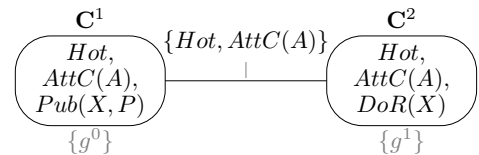
\begin{figure}[b]
    \centering
    \scalebox{0.9}{
\begin{tikzpicture}[node distance=46mm]
	\node[pc, label={[gray, inner sep=1pt]270:{$\{g^0\}$}},label={[font=]90:{$\mathbf{C}^1$}}]				(c1) {$Hot,$\\$ AttC(A),$\\ $Pub(X,P)$};
	\node[pc, right of=c1, label={[gray, inner sep=1pt]270:{$\{g^1\}$}},label={90:{$\mathbf{C}^2$}}]	(c2) {$Hot,$\\$ AttC(A),$ \\ $DoR(X)$};
	\draw (c1) -- node[inner sep=1pt, pin={[yshift=-2.4mm]90:{$\{Hot,AttC(A)\}$}}]		{} (c2);
\end{tikzpicture}
}
\caption{FO jtree for $G^{ex}$ (local models in grey)}
\label{fig:fojt}
\end{figure}

\ac{ljt} constructs an \ac{fojt} using a \ac{fodt}, enters evidence in the \ac{fojt}, and passes messages through an \emph{inbound} and an \emph{outbound} pass, to distribute local information of the nodes through the \ac{fojt}.
To compute a message, \ac{ljt} eliminates all non-seperator \acp{prv} from the parcluster's local model and received messages.
After message passing, \ac{ljt} answers queries.
For each query, LJT finds a parcluster containing the query term and sums out all non-query terms in its local model and received messages.

\Cref{fig:fojt} shows an \ac{fojt} of $G^{ex}$ with the local models of the parclusters and the separators as labels of edges.
During the \emph{inbound} phase of message passing, \ac{ljt} sends messages from $\mathbf{C}^1$ to $\mathbf{C}^2$ and for the \emph{outbound} phase a message from $\mathbf{C}^2$ to $\mathbf{C}^1$.
If we want to know whether $Hot$ holds, we query for $P(Hot)$ for which \ac{ljt} can use either parcluster $\mathbf{C}^1$ or $\mathbf{C}^2$.
Thus, \ac{ljt} can sum out $AttC(A)$ and $DoR(X)$ from $\mathbf{C}^2$'s local model $G^{2}$, $\{g^1\}$, combined with the received messages, here, one message from $\mathbf{C}^1$. 

\subsection{LDJT: Overview} 

\ac{ldjt} efficiently answers queries $P(\mathbf{Q}_t|\mathbf{E}_{0:t})$, with a set of query terms $\{\mathbf{Q}_t\}_{t=0}^T$, given a \ac{pdm} $G$ and evidence $\{\mathbf{E}_t\}_{t=0}^T$, by performing the following steps:
\begin{enumerate*}[label=(\roman*)]
    \item Construct offline two \acp{fojt} $J_0$ and $J_t$ with \emph{in-} and \emph{out-clusters} from $G$.
    \item For $t=0$, using $J_0$ to enter $\mathbf{E}_0$, pass messages, answer each query term $Q_\pi^i \in \mathbf{Q}_0$, and preserve the state.
    \item For $t>0$, instantiate $J_t$ for the current time step $t$, recover the previous state, enter $\mathbf{E}_t$ in $J_t$, pass messages, answer each query term $Q_\pi^i \in \mathbf{Q}_t$, and preserve the state.
\end{enumerate*}

Next, we show how \ac{ldjt} constructs the \acp{fojt} $J_0$ and $J_t$ with \emph{in-} and \emph{out-clusters}, which contain a minimal set of \acp{prv} to m-separate the \acp{fojt}.
M-separation means that information about these \acp{prv} make \acp{fojt} independent from each other.
Afterwards, we present how \ac{ldjt} connects the \acp{fojt} for reasoning to solve the filtering and prediction problems efficiently.

\subsection{LDJT: FO Jtree Construction for PDMs}\label{ldjt:const} 

\ac{ldjt} constructs \acp{fojt} for $G_0$ and $G_\rightarrow$, both with an incoming and outgoing interface.
To be able to construct the interfaces in the \acp{fojt}, \ac{ldjt} uses the \ac{pdm} $G$ to identify the interface \acp{prv} $\mathbf{I}_t$ for a time slice $t$.
\begin{definition}
    The forward interface is defined as $\mathbf{I}_{t} = \{A_{t}^i \mid \exists \phi(\mathcal{A}) | C \in G :  A_{t}^i \in \mathcal{A} \wedge \exists A_{t+1}^j \in \mathcal{A}\}$, i.e., the \acp{prv} which have successors in the next slice. 
\end{definition}

For $G_{\rightarrow}^{ex}$, which is shown in \cref{fig:TSPG}, \acp{prv} $Hot_{t-1}$ and $Pub_{t-1}(X,P)$ have successors in the next time slice, making up $\mathbf{I}_{t-1}$.
To ensure interface \acp{prv} $\mathbf{I}$ ending up in a single parcluster, \ac{ldjt} adds a \ac{pf} $g^I$ over the interface to the model.
Thus, \ac{ldjt} adds a \ac{pf} $g^I_0$ over $\mathbf{I}_0$ to $G_0$, builds an \ac{fojt} $J_0$ and labels the parcluster with $g^I_0$ from $J_0$ as \emph{in-} and \emph{out-cluster}.
For $G_\rightarrow$, \ac{ldjt} removes all non-interface \acp{prv} from time slice $t-1$, adds \acp{pf} $g^I_{t-1}$ and $g^I_{t}$, constructs $J_t$.
Further, \ac{ldjt} labels 
the parcluster containing $g^I_{t-1}$ as \emph{in-cluster} and labels the parcluster containing $g^I_{t}$ as \emph{out-cluster}.

The interface \acp{prv} are a minimal required set to m-separate the \acp{fojt}.
\ac{ldjt} uses these \acp{prv} as separator to connect the \emph{out-cluster} of $J_{t-1}$ with the \emph{in-cluster} of $J_t$, allowing to reusing the structure of $J_t$ for all $t>0$.

\subsection{LDJT: Proceeding in Time with the FO Jtree Structures}\label{sec:forward}

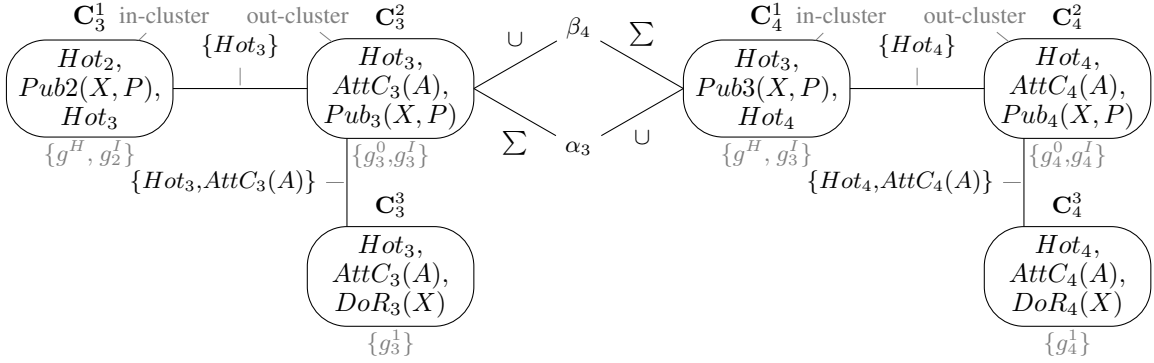
\begin{figure*}[t]
\center
\scalebox{1}{
\begin{tikzpicture}[every node/.style={font=\footnotesize}, node distance=40mm]
    
	\node[pc, label={[gray, inner sep=1pt]270:{$\{g^H,$ $g^I_2\}$}},    pin={[pin distance=1mm, gray, align=center]70:{in-cluster}}, label={[font=]90:{$\mathbf{C}^1_3$}}]				(c1) at (-10,0) {$Hot_{2},$\\ $Pub{2}(X,P),$ \\$Hot_{3}$};
    
	\node[pc, right of=c1,     pin={[pin distance=1mm, gray, align=center]120:{out-cluster}},
label={[gray, inner sep=1pt]270:{$\{g^0_3, $$g^I_3 \}$}},label={90:{$\mathbf{C}^2_3$}}]	(c2) {$Hot_{3},$\\$ AttC_{3}(A),$ \\ $Pub_{3}(X,P)$};
    \node[pc, below of=c2, node distance=2.5cm, label={[gray, inner sep=1pt]270:{$\{g^1_3\}$}},label={90:{$\mathbf{C}^3_3$}}]	(c3) {$Hot_{3},$\\$ AttC_{3}(A),$ \\ $DoR_{3}(X)$};
    
    \node[right of = c2, node distance=2.5cm] (c) {};
    \node[below of = c, node distance=0.8cm] (c11) {$\alpha_3$};
    \node[above of = c, node distance=0.8cm] (b) {$\beta_4$};
    
	\node[pc, right of=c, node distance=2.5cm, label={[gray, inner sep=1pt]270:{$\{g^H,$ $g^I_3\}$}},  pin={[pin distance=1mm, gray, align=center]70:{in-cluster}}, label={[font=]90:{$\mathbf{C}^1_4$}}]				(c4) {$Hot_{3},$\\ $Pub{3}(X,P),$ \\$Hot_{4}$};
	\node[pc, right of=c4,     pin={[pin distance=1mm, gray, align=center]120:{out-cluster}},
label={[gray, inner sep=1pt]270:{$\{g^0_4, $$g^I_4 \}$}},label={90:{$\mathbf{C}^2_4$}}]	(c5) {$Hot_{4},$\\$ AttC_{4}(A),$ \\ $Pub_{4}(X,P)$};
    \node[pc, below of=c5, node distance=2.5cm, label={[gray, inner sep=1pt]270:{$\{g^1_4\}$}},label={90:{$\mathbf{C}^3_4$}}]	(c6) {$Hot_{4},$\\$ AttC_{4}(A),$ \\ $DoR_{4}(X)$};

	\draw (c1) -- node[inner sep=1pt, pin={[yshift=-2.4mm]90:{$\{Hot_3\}$}}]		{} (c2);
	\draw (c2.after south west) -- node[inner sep=1pt, pin={[xshift=2.4mm]180:{$\{Hot_3, $$ AttC_3(A)\}$}}]		{} (c3.before north west);

	\draw (c4) -- node[inner sep=1pt, pin={[yshift=-2.4mm]90:{$\{Hot_4\}$}}]		{} (c5);
	\draw (c5.after south west) -- node[inner sep=1pt, pin={[xshift=2.4mm]180:{$\{Hot_4, $$ AttC_4(A)\}$}}]		{} (c6.before north west);

    \path [-] (c11) edge node [label= below:{$\sum$}] {} (c2.east);
    \path [-] (c11) edge node [label= below:{$\cup$}] {} (c4.west);
    \path [-] (b) edge node [label= above:{$\cup$}] {} (c2.east);
    \path [-] (b) edge node [label= above:{$\sum$}] {} (c4.west);

\end{tikzpicture}
}
\caption{Forward and backward pass of \ac{ldjt} (local models and labeling in grey)}
\label{fig:fojt1}	
\end{figure*}

Since $J_0$ and $J_t$ are static, \ac{ldjt} uses \ac{ljt} as a subroutine by passing on a constructed \ac{fojt}, queries, and evidence for step $t$ to handle evidence entering, message passing, and query answering using the \ac{fojt}.
Further, for proceeding to the next time step, \ac{ldjt} calculates an $\alpha_t$ message over the interface \acp{prv} using the \emph{out-cluster} to preserve the information about the current state.
Afterwards, \ac{ldjt} increases $t$ by one, instantiates $J_t$, and adds $\alpha_{t-1}$ to the \emph{in-cluster} of $J_t$.
During message passing, $\alpha_{t-1}$ is distributed through $J_t$.

\Cref{fig:fojt1} depicts how \ac{ldjt} uses the interface message passing between time step three to four.
First, \ac{ldjt} sums out the non-interface \ac{prv} $AttC_3(A)$ from $\mathbf{C}_3^2$'s local model and the received messages and saves the result in message $\alpha_3$.
After increasing $t$ by one, \ac{ldjt} adds $\alpha_3$ to the \emph{in-cluster} of $J_4$, $\mathbf{C}_4^1$.
$\alpha_3$ is then distributed by message passing and accounted for during calculating $\alpha_4$.

    \section{Preventing Groundings in LJT}

A lifted solution to a query given a model means that we compute an answer without grounding a part of the model. 
Unfortunately, not all models have a lifted solution because \ac{lve}, the basis for \ac{ljt}, requires certain conditions to hold.
Therefore, these models involve groundings with any exact lifted inference algorithm. 
Grounding a \ac{lv} is expensive and, during message passing, may propagate through all nodes.
\ac{ljt} has a few approaches to prevent groundings for a static \ac{fojt}.
On the one hand, some approaches originate from \ac{lve}.
On the other hand, \ac{ljt} has a fuse operator to prevent groundings, occurring due to a non-ideal elimination order.
Finding an optimal elimination order is in general NP-hard \cite{darwiche2009modeling}.
This section is mainly based on \cite{braun2017preventing}.

\subsection{General Grounding Prevention Techniques from LVE}

One approach to prevent groundings is to perform lifted summing out.
The idea is to compute VE for one case and exponentiate the result for isomorphic instances.
Another approach in \ac{lve} to prevent groundings is count-conversion, which exploits that all \acp{rv} of a \ac{prv} $A$ evaluate to a value $v$ of $range(A)$.
\ac{lve} forms a histogram by counting for each $v \in range(A)$ how many instances of $gr(A)$ evaluate to $v$.
Let us start by defining \ac{crv}.
\begin{definition}
    $\#_{X \in C}[P(\mathbf{X})]$ denotes a \ac{crv} with \ac{prv} $P(\mathbf{X})$ and constraint $C$, where $lv(\mathbf{X}) = \{X\}$.
    Its range is the space of possible histograms. 
    If $\{X\} \subset lv(\mathbf{X})$, the \ac{crv} is a parameterised CRV (PCRV) representing a set of \acp{crv}. 
    Since counting binds \ac{lv} $X$, $lv(\#_{X \in C}[P(\mathbf{X})]) = \mathbf{X} \setminus \{X\}$.
    We count-convert a \ac{lv} $X$ in a \ac{pf} $g = \mathbf{L} : \phi(\mathcal{A})|C$ by turning a \ac{prv} $A^i \in \mathcal{A}$, $X \in lv(A^i)$, into a \ac{crv} $A^{i'}$.
    In the new \ac{pf} $g^\prime$, the input for $A^{i'}$ is a histogram $h$. 
    Let $h(a^i)$ denote the count of $a^i$ in $h$. 
    Then, $\phi^\prime(...,a^{i-1},h,a^{i+1},...)$ maps to  $\prod_{a^i \in range(A^i)} \phi(...,a^{i-1},a^i,a^{i+1},...)^{h(a^i)}$.
\end{definition}
One precondition to count-convert a \ac{lv} $X$ in $g$, is that only one input in $g$ contains $X$.
To perform lifted summing out \ac{prv} $A$ from \ac{pf} $g$, $lv(A) = lv(g)$.
For the complete list of preconditions for both approaches, see \cite{TagFiDaBl13}.

\subsection{Preventing Groundings during Intra FO Jtree Message Passing}\label{sec:fusion}

During message passing, \ac{ljt} eliminates \acp{prv} by summing out. 
Thus, in case \ac{ljt} cannot apply lifted summing out, it has to ground \acp{lv}.
The messages \ac{ljt} passes via the separators restrict the elimination order, which can lead to grounding, in case lifted summing out is not applicable.

\ac{ljt} has three tests whether groundings occur during message passing.
Roughly speaking, the first test checks if \ac{ljt} can apply lifted summing out, the second test checks to prevent groundings by count-conversion, and the third test validates that a count-conversion will not result in groundings in another parcluster.

During message passing, a parcluster $\mathbf{C}^i = \mathcal{A}^i|C^i$ sends a message $m^{ij}$ containing the \acp{prv} of the separator $\mathbf{S}^{ij}$ to parcluster $\mathbf{C}^j$.
To calculate the message $m^{ij}$, \ac{ljt} eliminates the parcluster \acp{prv} not part of the separator, i.e., $\mathbf{E}^{ij} := \mathcal{A}^i \setminus \mathbf{S}^{ij}$, from the local model and all messages received from other nodes than $j$, i.e., $G^\prime := G^i \cap \{m^{il}\}_{l \neq j}$. 
To eliminate a \ac{prv} from $G^\prime$, \ac{ljt} has to eliminate the \ac{prv} from all \acp{pf} of $G^\prime$.
By combining all these \acp{pf}, \ac{ljt} only has to check whether a lifted summing out is possibile to eliminate the \ac{prv} for all \acp{pf}.
To eliminate $E \in \mathbf{E}^{ij}$ by lifted summing out from $G^\prime$, we replace all \acp{pf} $g \in G^\prime$ that include $E$ with a \ac{pf} $g^E = \phi(\mathcal{A}^E)|C^E$ that is the lifted product or the combination of these \acp{pf}. 
Let $\mathbf{S}^{ij^E} := \mathbf{S}^{ij} \cap \mathcal{A}^E$ be the set of \acp{rv} in the separator that occur in $g^E$. 
For lifted message calculation, it necessarily has to hold $\forall S \in \mathbf{S}^{ij^E}$,
\begin{equation}\label{eq:1}
    lv(S) \subseteq lv(E).
\end{equation}
Otherwise, $E$ does not include all \acp{lv} in $g^E$. 
\ac{ljt} may induce \cref{eq:1} for a particular $S$ by count conversion if $S$ has an additional, count-convertible \ac{lv}:
\begin{equation}\label{eq:2}
    lv(S) \setminus lv(E) = \{L\}, \text{ L count-convertible in }g^E.
\end{equation}
In case \cref{eq:2} holds, \ac{ljt} count-converts $L$, yielding a (P)CRV in $m^{ij}$, else, \ac{ljt} grounds.
Unfortunately, a (P)CRV can lead to groundings in another parcluster.
Hence, count-conversion helps in preventing a grounding if all following messages can handle the resulting (P)CRV. 
Formally, for each node $k$ receiving $S$ as a (P)CRV with counted \ac{lv} $L$, it has to hold for each neighbour $n$ of $k$ that

\begin{equation}\label{eq:3}
    S \in \mathbf{S}^{kn} \vee\text{ L count-convertible in }g^S.
\end{equation}
\ac{ljt} fuses two parclusters to prevent groundings if \cref{eq:1,eq:2,eq:3} checks determine groundings would occur by message passing between these two parcluster.

    \section{Preventing Groundings in LDJT}

Unnecessary groundings have a huge impact on temporal models, as groundings during message passing can propagate through the complete model, basically turing it into the ground model.
\ac{ldjt} has an intra and inter \ac{fojt} message passing phase.
Intra \ac{fojt} message passing takes place inside of an \ac{fojt} for one time step.
Inter \ac{fojt} message passing takes place between two \acp{fojt}.
To prevent groundings during intra \ac{fojt} message passing, \ac{ljt} successfully proposes to fuse parclusters \cite{braun2017preventing}.
Unfortunately, having two \acp{fojt}, \ac{ldjt} cannot fuse parclusters from different \acp{fojt}.
Hence, \ac{ldjt} requires a different approach to prevent unnecessary groundings during inter \ac{fojt} message passing.

In the following, we present how \ac{ldjt} prevents grounding and discuss the combination of preventing groundings during both intra and inter \ac{fojt} message passing as well as the implications for a lifted run.

\subsection{Preventing Groundings during Inter FO Jtree Message Passing}

As we desire a lifted solution, \ac{ldjt} also needs to prevent unnecessary groundings induced during inter \ac{fojt} message passes.
Therefore, \ac{ldjt}'s \emph{expanding} performs two steps:
\begin{enumerate*}
    \item check whether inter \ac{fojt} message pass induced groundings occur, 
    \item prevent groundings by extending the set of interface \acp{prv}, and
    prevent possible intra \ac{fojt} message pass induced groundings. 
\end{enumerate*}

\subsubsection{Checking for Groundings}

To determine whether an inter \ac{fojt} message pass induces groundings, \ac{ldjt} also uses
\cref{eq:1,eq:2,eq:3}.
For the forward pass, \ac{ldjt} applies the equations to check whether the $\alpha_{t-1}$ message from $J_{t-1}$ to $J_t$ leads to groundings.
More precisely, \ac{ldjt} needs to check for groundings for the inter \ac{fojt} message passing between $J_0$ and $J_1$ as well as between two temporal \ac{fojt} copy patters, namely $J_{t-1}$ to $J_t$ for $t > 1$.

Thus, \ac{ldjt} checks all \acp{prv} $E \in \mathbf{E}^{ij}$, where $i$ is the \emph{out-cluster} from $J_{t-1}$ and $j$ is the \emph{in-cluster} from $J_t$, for groundings.
In case \cref{eq:1} holds, no additional checks for $E$ are necessary as eliminating $E$ does not induce groundings.
In case \cref{eq:2} holds, \ac{ldjt} has to test whether \cref{eq:3} holds in $J_t$ at least on the path from \emph{in-cluster} to \emph{out-cluster}. 
Hence, if \cref{eq:2,eq:3} both hold, eliminating $E$ does not lead to groundings, but if  \cref{eq:2} or \cref{eq:3} fail groundings occur during message passing.  

\subsubsection{Expanding Interface Separators}

In case eliminating $E$ leads to groundings, \ac{ldjt} delays the elimination to a point where the elimination does no longer lead to groundings. 
Therefore, \ac{ldjt} adds $E$ to the \emph{in-cluster} of $J_t$, which results in $E$ also being added to the inter \ac{fojt} separator .
Hence, \ac{ldjt} does not need to eliminate $E$ in the \emph{out-cluster} of $J_{t-1}$ anymore.
Based on the way \ac{ldjt} constructs the \ac{fojt} structures, the \acp{fojt} stay valid.
Lastly, \ac{ldjt} prevents groundings in the extended \emph{in-cluster} of $J_t$ as described in \cref{sec:fusion}.

Let us now have a look at \cref{fig:fojt1} to understand the central idea of preventing inter \ac{fojt} message pass induced groundings.
\cref{fig:fojt1} shows $J_t$ instantiated for time step $3$ and $4$.
Using these instantiations, \ac{ldjt} checks for groundings during inter \ac{fojt} message passing for the temporal copy pattern.
To compute $\alpha_3$, \ac{ldjt} eliminates $AttC_3(A)$ from $\mathbf{C}^2_3$'s local model.
Hence, \ac{ldjt} checks whether the elimination leads to groundings.
In this example, \cref{eq:1} does not hold, since $AttC_3(A)$ does not contain all \acp{lv}, $X$ and $P$ are missing.
Additionally, \cref{eq:2} is not applicable, as the expression $lv(S) \setminus lv(E) = \{X, P\} \setminus \{C\} = \{X, P\}$, which contains more than one \ac{lv} and therefore is not count-convertible.

As eliminating $AttC_3(A)$ leads to groundings, \ac{ldjt} adds $AttC_3(A)$ to the parcluster $\mathbf{C}^1_4$.
Additionally, \ac{ldjt} also extends the inter \ac{fojt} separator with $AttC_3(A)$ and thereby changes the elimination order.
By doing so, \ac{ldjt} does not need to eliminate $AttC_3(A)$ in $\mathbf{C}^2_3$ anymore and therefore calculating $\alpha_3$ does not lead to groundings.
However, \ac{ldjt} has to check whether adding the \ac{prv} leads to groundings in $\mathbf{C}^1_4$.
For the extended parcluster $\mathbf{C}^1_4$, \ac{ldjt} needs to eliminate the \acp{prv} $Hot_3$, $AttC_3(A)$, and $Pub3(X,P)$.
To eliminate $Pub3(X,P)$, \ac{ldjt} first count-converts $AttC_3(A)$ and then \cref{eq:1} holds for $Pub3(X,P)$.
Afterwards, it can eliminate the count-converted $AttC_3(A)$ and the \ac{prv} $Hot_3$ as \cref{eq:1} holds for both of them.
Thus, by adding the \ac{prv} $AttC_{t-1}(A)$ to the \emph{in-cluster} of $J_t$ and thereby to the inter \ac{fojt} separator, \ac{ldjt} can prevent unnecessary groundings. 
Additionally, as \ac{ldjt} uses this \ac{fojt} structure for all time steps $t > 0$, i.e., the changes to the structure also hold for all $t > 0$.

\begin{theorem}
    \ac{ldjt}'s \emph{expanding} is correct and produces a valid \ac{fojt}.
\end{theorem}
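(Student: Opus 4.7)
The plan is to decouple the statement into two independent claims: \textbf{(a)} after every application of \emph{expanding}, the pair $(J_{t-1}, J_t)$ together with the enlarged inter-separator still satisfies the three \ac{fojt} properties, and \textbf{(b)} message passing on the modified structure computes exactly the same marginals as on the original one.

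For \textbf{(a)}, I would fix an application of \emph{expanding} that promotes some $E \in \mathbf{E}^{ij}$ (where $i$ is the out-cluster of $J_{t-1}$ and $j$ is the in-cluster $\mathbf{C}^1_t$ of $J_t$) into $\mathbf{C}^1_t$ and into the inter-\ac{fojt} separator $\mathbf{S}^{ij}$. Property (1) is immediate because $E$ is a \ac{prv} drawn from $G$. Property (2) is preserved because no \ac{pf} is moved, split, or deleted; every local model $G^k$ keeps its current assignment. Property (3), the running intersection, is the interesting one: across the inter-\ac{fojt} edge both endpoints now contain $E$ with no intermediate parcluster, so nothing to check; inside $J_t$ no other parcluster mentions $E$, since $E$ is a slice-$(t{-}1)$ \ac{prv} and $J_t$ contains slice-$(t{-}1)$ \acp{prv} only through its in-cluster; and inside $J_{t-1}$ the tree is unchanged. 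Any intra-\ac{fojt} fusion applied afterwards to $\mathbf{C}^1_t$ preserves (3) by \cite{braun2017preventing}.

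For \textbf{(b)}, I would argue by commutativity of lifted summing out. The full joint encoded by the union of local models is unchanged because \emph{expanding} does not modify any \ac{pf}. Original \ac{ldjt} eliminates $E$ inside the out-cluster of $J_{t-1}$ while computing $\alpha_{t-1}$; the modified \ac{ldjt} instead forwards the unsummed $E$ into $\mathbf{C}^1_t$ through the enlarged separator and eliminates it there, possibly after a count-conversion justified by the checks in \cref{sec:fusion}. The set of \acp{pf} that multiply with $E$ prior to its elimination is identical in both cases, because $E$ only occurs in slice $t-1$ of the \ac{2tpm}, all slice-$(t{-}1)$ information has already been collected into the out-cluster by the inbound pass of $J_{t-1}$, and the enlarged separator faithfully carries that combined parfactor across. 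Hence the marginal over every query \ac{prv} is unchanged.

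The main obstacle is tracking running intersection across iterated expansions and across the stationary reuse of $J_t$ for every $t > 0$. A single expansion is straightforward, but repeated expansions, possibly involving different \acp{prv} in successive time steps, could in principle yield a parcluster containing $E$ that is not adjacent to another parcluster containing $E$. I expect this to be handled by induction on the sequence of expansion and fusion operations, using the invariant that each expansion inserts the delayed \ac{prv} into the unique parcluster adjacent to the one from which its elimination is being delayed, and that the stationary reuse of $J_t$ propagates the modification consistently through all later time steps.
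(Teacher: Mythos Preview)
Your proposal is correct and follows essentially the same line as the paper's own proof, but is considerably more detailed. The paper's argument for validity is a one-step observation: the intersection of the \acp{prv} occurring in $J_{t-1}$ and in $J_t$ is exactly $\mathbf{I}_{t-1}$, and since any $E \in \mathbf{E}^{ij}$ lies outside $\mathbf{I}_{t-1}$, it occurs in no parcluster of $J_t$; hence adding $E$ to a single parcluster cannot possibly violate running intersection. Your property-(3) check arrives at the same conclusion, though your justification ``$J_t$ contains slice-$(t{-}1)$ \acp{prv} only through its in-cluster'' is slightly loose (interface \acp{prv} of slice $t{-}1$ may appear in several parclusters of $J_t$); the cleaner statement is simply that $E$ is a non-interface \ac{prv} and therefore absent from $J_t$ altogether. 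For correctness, the paper merely notes that the enlarged separator is an over-approximation of the interface; your commutativity-of-elimination argument is a more explicit unpacking of the same point.

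Your ``main obstacle'' about iterated expansions across successive time steps is unnecessary: \emph{expanding} is applied once, offline, to the template structures $J_0$ and $J_t$ (checking $J_0 \to J_1$ and $J_{t-1} \to J_t$), and the modified template is then reused for all $t>0$. There is no per-step re-expansion that could accumulate different delayed \acp{prv}, so the induction you anticipate never arises.
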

  
\begin{proof}
    After \ac{ldjt} creates the \ac{fojt} structures initially, the separator between \ac{fojt} $J_{t-1}$ and $J_t$ consists of exactly the \acp{prv} from $\mathbf{I}_{t-1}$.
    Thus, by taking the intersection of the \acp{prv} contained in $J_{t-1}$ and $J_t$, we get the set of \acp{prv} from $\mathbf{I}_{t-1}$.
    While \ac{ldjt} calculates $\alpha_{t-1}$, it only needs to eliminate \acp{prv} $\mathbf{E}$ not contained in the separator and thereby $\mathbf{I}_{t-1}$.
    Therefore, all $E \in \mathbf{E}$ are not contained in any parcluster of $J_t$.
    Hence, by adding $E$ to the \emph{in-cluster} of $J_t$, \ac{ldjt} does not violate any \ac{fojt} properties.
    Further, \ac{ldjt} does not even have to validate properties like the running intersection property, since it could not have been violated in the first place. 
    Additionally, \ac{ldjt} extends the set of interface \acp{prv}, resulting in an over-approximation of the required \acp{prv} for the inter \ac{fojt} communication to be correct.
\end{proof}

\subsection{Discussion}

In the following, we start by discussing workload and performance aspects of the intra and inter \ac{fojt} message passing.
Afterwards, we present model constellations where \ac{ldjt} cannot prevent groundings and indicate the extension of the presented algorithm to a backward pass.
 
\subsubsection{Performance}
The additional workload for \ac{ldjt} introduced by handling unnecessary groundings is moderate.
In the best case, \ac{ldjt} checks \cref{eq:1,eq:2,eq:3} for calculating two messages, namely for the $\alpha_{t-1}$ message and for the message \ac{ldjt} passes from in \emph{in-cluster} of $J_t$ in the direction of the \emph{out-cluster} of $J_t$.
In the worst case, \ac{ldjt} needs to check $1 + (m-1)$ messages, where $m$ is the number of parclusters on the path from the \emph{in-cluster} to the \emph{out-cluster} in $J_t$.

From a performance point of view, increasing the size of the $\alpha$ messages and of a parcluster is not ideal, but always better than the impact of groundings.
By applying the intra \ac{fojt} message passing check, \ac{ldjt} may fuse the \emph{in-cluster} and \emph{out-cluster}, which most likely results in a parcluster with many model \acp{prv}. 
Increasing the number of \acp{prv} in a parcluster, increases \ac{ldjt}'s workload for query answering.
But even with the increased workload a lifted run is faster than grounding. 
However, in case the checks determine that a lifted solution is not obtainable, using the initial model with the local clustering is the best solution.

First, applying \ac{ljt}'s \emph{fusion} is more efficient since fusing the \emph{out-cluster} with another parclusters could increase the number of its \acp{prv}.
In case of changed \acp{prv}, \ac{ldjt} has to rerun the \emph{expanding} check.
Therefore, \ac{ldjt} first applies the intra and then the inter \ac{fojt} message passing checks.


\subsubsection{Groundings \ac{ldjt} Cannot Prevent}

Fusing the \emph{in-cluster} and \emph{out-cluster} due to the inter \ac{fojt} message passing check is one case for which \ac{ldjt} cannot prevent groundings.
In this case, \ac{ldjt} cannot eliminate $E$ in the \emph{out-cluster} of $J_{t-1}$ without groundings.
Thus, \ac{ldjt} adds $E$ to the \emph{in-cluster} of $J_t$.
The checks whether \ac{ldjt} can eliminate $E$ on the path from the \emph{in-cluster} to the \emph{out-cluster} of $J_t$ fail.
Thereby, \ac{ldjt} fuses all parclusters on the path between the two parclusters and \ac{ldjt} still cannot eliminate $E$.
Even worse, \ac{ldjt} cannot eliminate $E$ from time step $t-1$ and $t$ in the \emph{out-cluster} to calculate $\alpha_t$.
In theory, for an unrolled model, a lifted solution might be possible, but with many \acp{prv} in a parcluster, since, in addition to other \acp{prv}, one parcluster contains $E$ for all time steps.
Depending on the domain size and the maximum number of time steps, either grounding or using the unrolled model is advantageous.

If $S$ occurs in an inter-slice \ac{pf} for both time steps, then another source of groundings is a count-conversion of $S$ to eliminate $E$.
In such a case, \ac{ldjt} cannot count-convert $S$ in the inter-slice \ac{pf}, which leads to groundings.

\subsubsection{Extension}
So far, we focused on preventing groundings during a forward pass, which is the most crucial part as \ac{ldjt} needs to proceed forward in time.
\Cref{fig:fojt1} also indicates a backward pass during inter \ac{fojt} message passing.
Actually, the presented idea can be applied to a backward pass.
The proof also holds for the backward pass, since intersecting the sets of \acp{prv} of $J_{t-1}$ and $J_t$ only contains the \acp{prv} $\mathbf{I}_{t-1}$.
Therefore, if a \ac{prv} $E$ from $J_t$ is added to $J_{t-1}$, $E$ is not included in $J_{t-1}$ and thereby $J_{t-1}$ is still valid. 

    \begin{figure*}[t]
    \includegraphics[width=\textwidth]{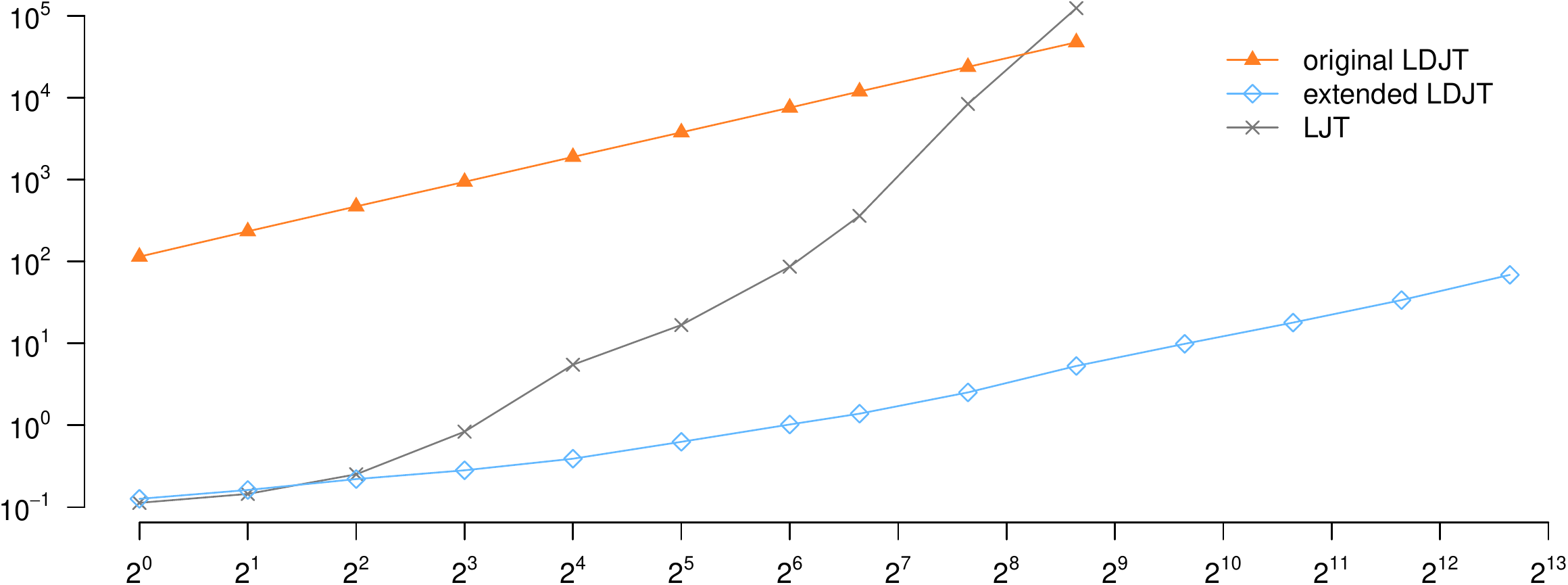}
  \caption{Y-axis: runtimes [seconds], x-axis: maximum time steps, both in log scale}
  \label{fig:times_overall}	
\end{figure*}

\section{Evaluation}\label{sec:eval}

For the evaluation, we use the example model $G^{ex}$ with the set of evidence being empty, for $|\mathcal{D}(X)| = 10 $, $|\mathcal{D}(P)| = 3 $, $|\mathcal{D}(C)| = 20$, and the queries
   $ \{Hot_t    ,      AttC_t(c_1)     ,            DoR_t(x_1)\}$
for each time step. 
We compare the runtimes on commodity hardware with 16 GB of RAM of the extended \ac{ldjt} version against \ac{ldjt}'s original version and then also against \ac{ljt} for multiple maximum time steps.

\Cref{fig:times_overall} shows the runtime in seconds for each maximum time step.
We can see that the runtime of the extended \ac{ldjt} (diamond) and the original \ac{ldjt} (filled triangle) is, as expected, linear, while the runtime of \ac{ljt} (cross) roughly is exponential, to answer queries for changing maximum number of time steps.
Further, we can see how crucial preventing groundings is.
Due to the \ac{fojt} construction overhead, the extended version is about a magnitude of three faster for first few time steps, but the construction overhead becomes negligible with more time steps.
Overall, the extended \ac{ldjt} is up to a magnitude of four faster.

Additionally, we see the runtimes of \ac{ljt}.
The runtimes with and without fusion are about the same and thus not distinguished.
\ac{ljt} is faster for the initial time steps, especially in case grounding are prevented by unrolling.
Nonetheless, after several time steps, the size of the parclusters becomes a big factor, which also explains the exponential behaviour \cite{taghipour2013first}.
To summarise the evaluation results, on the one hand, we see how crucial the prevention of groundings is and, on the other hand, how crucial the dedicated handling of temporal aspects is. 

    \section{Conclusion}\label{sec:conc}

We present how \ac{ldjt} can prevent unnecessary groundings by delaying eliminations to the next time step and thereby changing the elimination order.
To delay eliminations, \ac{ldjt} increases the \emph{in-cluster} of the temporal \ac{fojt} structure and the separator between \emph{out-cluster} and \emph{in-cluster} with \acp{prv}, which lead to the groundings.
Further, due to temporal m-separation, which is ensured by the \emph{in-} and \emph{out-clusters}, \ac{ldjt} reuses the same changed \ac{fojt} structure for all time steps $t > 0$.
First results show that the extended \ac{ldjt} significantly outperforms the orignal version and \ac{ljt} if unnecessary groundings occur.

We currently work on extending \ac{ldjt} to also calculate the most probable explanation. 
Other interesting future work includes a tailored automatic learning for \acp{pdm}, parallelisation of \ac{ljt}, and improved evidence entering.
    
\bibliographystyle{aaai}
\bibliography{../bib/tex/bib}

\end{document}